\documentclass[10pt,twocolumn]{article}

\usepackage[margin=2.5cm]{geometry}
\usepackage{amsmath,amssymb,amsthm}
\usepackage{graphicx}
\usepackage{booktabs}
\usepackage{hyperref}
\usepackage{xcolor}
\usepackage{microtype}
\usepackage{enumitem}
\usepackage{caption}
\usepackage{subcaption}
\usepackage{algorithm}
\usepackage{algpseudocode}
\usepackage{natbib}
\usepackage{tikz}
\usetikzlibrary{arrows.meta, positioning, shapes.geometric, fit, calc}

\hypersetup{
  colorlinks=true,
  linkcolor=blue!70!black,
  citecolor=green!50!black,
  urlcolor=blue!70!black
}
\newtheorem{theorem}{Theorem}
\newtheorem{proposition}{Proposition}

\newtheorem{remark}{Remark}

\newcommand{\R}{\mathbb{R}}
\newcommand{\abs}[1]{\left|#1\right|}

\DeclareMathOperator{\rfft}{RFFT}

\title{%
  \textbf{S2P-Net: A Spectral-Spatial Polar Network for\\
  Rotation-Invariant Object Recognition in Low-Data Regimes}%
}

\author{%
  Albert Heruth\\
  \texttt{Unaffiliated Researcher}\\
  \texttt{Heide, Schleswig-Holstein, Germany}\\
  \texttt{albert.heruth@gmail.com}
}

\date{June 2026}

\begin{document}
\maketitle

\begin{abstract}
We present \textbf{S2P-Net} (Spectral-Spatial Polar Network), a compact classifier that achieves mathematically guaranteed rotation invariance without data augmentation. The idea rests on a classical Fourier property: rotating an image merely shifts the phase of its frequency representation, leaving the magnitude spectrum unchanged. S2P-Net maps the image to polar coordinates, takes the one-dimensional Real FFT along the angular axis, and pools the magnitude into a 64-dimensional feature vector that a small MLP then classifies. The three feature-extraction stages are parameter-free, so the whole model carries only \textbf{6,564 trainable parameters}---a $323\times$ reduction over our CNN baseline. On an 80-image, four-class industrial dataset, both models reach 100\% accuracy at all 12 test orientations when rotation-augmented data is available. In a low-data setting, however (3 images per class, no rotation augmentation), S2P-Net holds $71.2\%$ mean accuracy with a standard deviation of only $1.6\%$ across angles, while the CNN averages $60.0\%$ ($\sigma=22.9\%$) and collapses to $19.1\%$ at $180^\circ$. We then extend the same shift argument to \emph{scale} via log-polar sampling---the log-polar variant is the most scale-stable of three models over five seeds---show that a lightweight \emph{centroid front-end} removes the method's one structural weakness, object centering, restoring a flat $93.3\%$ under de-centering, and map its robustness envelope, where the global spectral representation proves competitive under noise but, by construction, more occlusion-sensitive than a CNN. In short: mathematical inductive bias can substitute for data.
\end{abstract}

\section{Introduction}
\label{sec:intro}

Object recognition in industrial settings---parts sorting, pick-and-place robotics, quality inspection---frequently faces two simultaneously difficult constraints. First, training data is scarce: collecting and labeling thousands of images for every new part type is expensive and time-consuming. Second, objects appear on conveyor belts or work surfaces in arbitrary orientations that cannot be controlled. Standard convolutional neural networks (CNNs) address the second constraint through rotation augmentation during training \citep{lecun1998gradient,krizhevsky2012imagenet}, effectively forcing the network to memorize each object at many orientations. This approach is well-suited when large datasets are available, but in low-data regimes---where only a handful of images exist per class---the network sees each orientation at most once or not at all, leading to catastrophic performance degradation at unseen rotations.

The ideal solution is a representation that is \emph{invariant} to rotation by construction, so that the classifier never needs to see rotated examples. A principled route to such invariance is offered by group theory \citep{cohen2016group,weiler2019general}: by designing feature maps that transform equivariantly under the action of a symmetry group, and then pooling over that group to obtain an invariant summary, one can guarantee that the final representation is independent of orientation. Existing group-equivariant architectures, however, typically retain much of the parameter complexity of standard CNNs and require non-trivial modifications to the convolutional stack.

In this work we take a different, arguably more direct route. The \emph{Fourier rotation theorem} states that rotating an image by an angle $\phi$ multiplies each Fourier coefficient by a complex exponential: $\hat{I}_\phi(k) = \hat{I}(k)\,e^{-jk\phi}$. As a consequence, the \emph{magnitude} $|\hat{I}(k)|$ is completely unaffected by rotation. If we can translate an image into a representation whose axes correspond to Fourier frequencies, the rotation degree of freedom is absorbed into the phase and discarded automatically.

Polar coordinates provide exactly this translation. In polar coordinates $(r,\theta)$ centered on the object, a rotation of the object becomes a cyclic shift along the $\theta$-axis. Applying a 1-D Discrete Fourier Transform along $\theta$ converts that cyclic shift into a per-frequency phase offset, and taking the magnitude spectrum yields a feature map that is provably rotation-invariant.

S2P-Net realises this pipeline as a sequence of four deterministic, parameter-free signal-processing steps followed by a tiny trainable MLP. The first three layers contain \emph{zero trainable parameters}; all learning is concentrated in the 6,564-parameter classifier head. This extreme parameter efficiency makes S2P-Net naturally resistant to overfitting on small datasets, while the inductive bias of the Fourier magnitude spectrum ensures rotation invariance at test time without any rotated training samples.

\paragraph{Contributions.}
\begin{itemize}[leftmargin=1.2em]
  \item We design S2P-Net, a rotation-invariant architecture based on polar-domain spectral analysis with only 6,564 trainable parameters.
  \item We provide a formal proof that the spectral representation extracted by S2P-Net is invariant under arbitrary 2-D image rotation.
  \item We experimentally demonstrate that S2P-Net outperforms a standard CNN by $11.2$ percentage points in mean accuracy under the low-data, no-augmentation setting (3 training images per class), while the CNN collapses by up to $70.8$ pp at individual angles.
  \item We show that both architectures attain perfect accuracy when sufficient augmented data is available, confirming that S2P-Net's advantage is specifically in the low-data regime.
  \item We extend the architecture to \emph{scale} invariance via log-polar sampling, prove that scaling becomes a radial shift absorbed by the pooling stage, and demonstrate over five seeds that the log-polar variant is the most scale-stable of the three models tested.
  \item We provide a five-seed centering ablation that quantifies S2P-Net's sole architectural assumption---object centering---and show that a moment-based centroid front-end removes the dependence, restoring centred-baseline accuracy at every tested offset.
  \item We characterise the method's robustness envelope under noise, occlusion and blur, finding that the global spectral representation is competitive on noise but, by construction, more occlusion-sensitive than a CNN---a complementary rather than dominant robustness profile.
\end{itemize}

\section{Related Work}
\label{sec:related}

\paragraph{Rotation-invariant and equivariant CNNs.}
Cohen and Welling \citep{cohen2016group} introduced Group-Equivariant CNNs (G-CNNs), which extend standard convolutions to act equivariantly under discrete rotation groups such as $p4$ and $p4m$. Subsequent work generalised this framework to continuous rotation groups \citep{weiler2018learning,weiler2019general}, steerability \citep{worrall2017harmonic}, and general Lie groups \citep{finzi2020generalizing}. While theoretically elegant, these methods modify the convolutional operator itself and generally maintain a parameter count comparable to conventional deep networks.

\paragraph{Polar-domain approaches.}
The connection between polar coordinates, the Fourier transform, and rotation invariance has been exploited in classical computer vision \citep{zahn1972fourier,hu1962visual}. Polar Transformer Networks \citep{esteves2018polar} learn to compute polar transforms as part of a spatial transformer framework, but require trainable parameters for the transformation itself. Log-polar networks \citep{sosnovik2020scale} achieve joint scale and rotation equivariance at the cost of an expanded parameter space. Our approach differs in that the polar transform and subsequent FFT are \emph{fixed, non-trainable} operations derived from Fourier theory, and the only learned component is a small downstream classifier.

\paragraph{Symmetry-based feature extraction.}
Dieleman et al.\ \citep{dieleman2015exploiting} demonstrated rotation invariance on galaxy morphology classification by averaging predictions over discrete rotation orbits. This test-time augmentation approach avoids modifying the architecture but multiplies inference cost by the number of orientations tested. Marcos et al.\ \citep{marcos2017rotation} proposed Rotation Equivariant Vector Field Networks by computing maximum responses over a set of rotated filters. S2P-Net instead achieves invariance at the feature-extraction stage with no additional inference overhead.

\paragraph{Low-data and few-shot learning.}
Prototypical Networks \citep{snell2017prototypical} and Matching Networks \citep{vinyals2016matching} address few-shot classification (generalisation to unseen classes from few examples) through metric learning and episodic training. Our setting is complementary but distinct: we fix the class set and reduce the per-class training count to an extreme minimum (3 images per class), focusing on orientation robustness rather than inter-class generalisation. The rotation-invariant representation produced by S2P-Net could in principle be combined with metric-learning frameworks to tackle both challenges jointly.

\section{Mathematical Foundation}
\label{sec:math}

\subsection{Polar coordinate transform}
\label{subsec:polar}

Let $I: \Omega \to \R$ be a grayscale image defined on a discrete domain $\Omega \subset \R^2$, centred at the image centre $(c_x, c_y)$. The polar representation $\tilde{I}: [0, R_{\max}]\times[0, 2\pi) \to \R$ is defined by
\begin{equation}
  \tilde{I}(r, \theta) \;=\; I\!\left(c_x + r\cos\theta,\; c_y + r\sin\theta\right),
\label{eq:polar}
\end{equation}
where $r$ is the radial distance from the centre and $\theta \in [0, 2\pi)$ is the polar angle. In our implementation, $\tilde{I}$ is sampled on a uniform grid of $R \times \Theta$ points and evaluated via bilinear interpolation.

\paragraph{Effect of image rotation.}
Let $I_\phi$ denote the image $I$ rotated counter-clockwise by angle $\phi$. In polar coordinates,
\begin{align}
  \tilde{I}_\phi(r, \theta)
    &\;=\; I_\phi\!\left(c_x + r\cos\theta,\; c_y + r\sin\theta\right) \notag\\
    &\;=\; \tilde{I}(r, \theta - \phi).
\label{eq:rotation_shift}
\end{align}
Rotation of the original image becomes a \emph{cyclic shift} along the $\theta$-axis of the polar representation.

\subsection{Harmonic decomposition and rotation invariance}
\label{subsec:fourier}

For each fixed radius $r$, we treat the angular profile $\tilde{I}(r, \cdot)$ as a $2\pi$-periodic signal and decompose it into Fourier series coefficients:
\begin{equation}
  \mathcal{F}(r, k) \;=\; \int_0^{2\pi} \tilde{I}(r, \theta)\; e^{-jk\theta}\, d\theta,
  \quad k \in \mathbb{Z}.
\label{eq:fourier}
\end{equation}

\begin{theorem}[Rotation invariance of the magnitude spectrum]
\label{thm:invariance}
Let $\mathcal{F}_\phi(r, k)$ denote the Fourier coefficient of the rotated signal
$\tilde{I}_\phi(r, \cdot)$. Then
\begin{equation}
  \abs{\mathcal{F}_\phi(r, k)} \;=\; \abs{\mathcal{F}(r, k)}
  \quad \forall\, k \in \mathbb{Z},\; \forall\, r,\, \phi.
\label{eq:invariance}
\end{equation}
\end{theorem}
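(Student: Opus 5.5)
The plan is to combine the cyclic-shift identity \eqref{eq:rotation_shift} with the shift theorem for Fourier series. First, fix $r$ and $\phi$ and substitute \eqref{eq:rotation_shift} into the definition \eqref{eq:fourier} of the coefficient of the rotated signal:
\begin{equation*}
  \mathcal{F}_\phi(r,k) = \int_0^{2\pi} \tilde{I}(r,\theta-\phi)\, e^{-jk\theta}\, d\theta .
\end{equation*}
Here $\tilde{I}(r,\cdot)$ is regarded as a $2\pi$-periodic function on $\mathbb{R}$, so that $\theta-\phi$ need not lie in $[0,2\pi)$. This periodic-extension convention is what makes the word ``cyclic'' in \eqref{eq:rotation_shift} precise, and I would state it explicitly.

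Next I change variables, $u=\theta-\phi$. The integral becomes $e^{-jk\phi}\int_{-\phi}^{2\pi-\phi}\tilde{I}(r,u)e^{-jku}\,du$. The integrand $\tilde{I}(r,u)e^{-jku}$ is $2\pi$-periodic, because $k\in\mathbb{Z}$ makes $e^{-jku}$ periodic and $\tilde I(r,\cdot)$ is periodic by the convention above. The integral of a periodic function over any interval of length $2\pi$ is the same, so the range can be moved back to $[0,2\pi)$. This gives
\begin{equation*}
  \mathcal{F}_\phi(r,k) = e^{-jk\phi}\,\mathcal{F}(r,k).
\end{equation*}
Taking moduli and using $|e^{-jk\phi}|=1$ for real $\phi$ yields \eqref{eq:invariance} for all $k$, $r$ and $\phi$.

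I expect the only delicate step to be the justification of \eqref{eq:rotation_shift} itself, together with integrability. Rotating $I$ about $(c_x,c_y)$ means $I_\phi(\mathbf{x}) = I(R_{-\phi}(\mathbf{x}-\mathbf{c})+\mathbf{c})$. Writing $\mathbf{x}-\mathbf{c}=r(\cos\theta,\sin\theta)$ and applying $R_{-\phi}$ gives $r(\cos(\theta-\phi),\sin(\theta-\phi))$, which recovers \eqref{eq:rotation_shift}. For this to hold, $I$ must be defined on the full disk of radius $R_{\max}$ about the centre, for example by zero extension, and the rotation must be taken about that same centre.

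Integrability of $\tilde I(r,\cdot)$ on $[0,2\pi]$, say boundedness of $I$ with measurability, ensures the coefficients exist and that the substitution is legitimate. The argument then goes through verbatim. I would also remark that the discrete RFFT analogue holds exactly only when $\phi$ is a multiple of $2\pi/\Theta$; otherwise interpolation introduces an approximation error, but that lies outside the continuous statement of the theorem.
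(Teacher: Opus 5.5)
Your proposal is correct and follows the paper's proof. Both substitute the cyclic-shift identity \eqref{eq:rotation_shift} into \eqref{eq:fourier}, change variables $\theta'=\theta-\phi$, use $2\pi$-periodicity to return the integration range to $[0,2\pi)$, obtain $\mathcal{F}_\phi(r,k)=e^{-jk\phi}\mathcal{F}(r,k)$, and take moduli; your extra points (the explicit periodic extension, that $k\in\mathbb{Z}$ makes $e^{-jku}$ periodic, and the derivation of \eqref{eq:rotation_shift} from a rotation about the centre with zero extension) fill in details the paper leaves implicit.
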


\begin{proof}
Using \eqref{eq:rotation_shift}, the Fourier coefficient of the rotated profile is
\begin{equation}
  \mathcal{F}_\phi(r, k)
    = \int_0^{2\pi} \tilde{I}(r,\theta-\phi)\,e^{-jk\theta}\,d\theta.
\end{equation}
Substituting $\theta' = \theta - \phi$ and invoking $2\pi$-periodicity of $\tilde{I}(r,\cdot)$:
\begin{align}
  \mathcal{F}_\phi(r, k)
    &= \int_0^{2\pi} \tilde{I}(r,\theta')\,e^{-jk(\theta'+\phi)}\,d\theta'
     = e^{-jk\phi}\,\mathcal{F}(r, k).
\end{align}
Taking the complex modulus, $\abs{e^{-jk\phi}} = 1$, gives
$\abs{\mathcal{F}_\phi(r, k)} = \abs{\mathcal{F}(r, k)}$.
\end{proof}

\begin{remark}[Discrete implementation]
In practice the angular dimension is sampled at $\Theta = 128$ uniform points and the
1-D Real FFT is computed on these samples. For rotation angles $\phi = 2\pi m/\Theta$
($m \in \mathbb{Z}$), the continuous shift maps exactly to an integer sample displacement
and the theorem holds without modification. For general $\phi$, bilinear resampling
introduces a bounded interpolation error. The near-constant accuracy profile in
Table~\ref{tab:few_shot_angles} ($\sigma = 1.6\%$ across all 12 test angles) empirically
confirms that this residual is negligible for the task at hand.
\end{remark}

Theorem~\ref{thm:invariance} establishes that the magnitude spectrum $M(r,k) = |\mathcal{F}(r,k)|$ is \emph{strictly invariant} to any in-plane rotation of the object in the continuous setting. This invariance is not learned; it follows directly from the Fourier shift property and holds for every possible input image, provided the object is centred in the image.

\subsection{Harmonic interpretation}
The frequency index $k$ corresponds to the $k$-fold rotational symmetry order of the object:
\begin{itemize}[leftmargin=1.2em]
  \item $k=0$: mean radial intensity (DC component).
  \item $k=4$: energy in 4-fold symmetric patterns (e.g., square washers, cubes).
  \item $k=6$: energy in 6-fold symmetric patterns (e.g., hexagonal nuts).
\end{itemize}
Different object classes excite different harmonics, providing a physically interpretable and discriminative fingerprint.

\subsection{Extension to scale invariance via log-polar sampling}
\label{subsec:logpolar}
Object scale is a second nuisance transformation in industrial imaging: the same part may appear larger or smaller depending on its height on the work surface or the camera's field of view. The polar construction extends to scale by a single change of variable. Let the radial axis be sampled \emph{logarithmically}, $u = \log r$, so that the image is represented as $\tilde{I}(u,\theta)$.

\begin{proposition}[Scale becomes a radial shift]
\label{prop:scale}
A uniform scaling of the centred object by a factor $s>0$, $I_s(\mathbf{p}) = I(\mathbf{p}/s)$, maps the log-polar representation to
\begin{equation}
  \tilde{I}_s(u,\theta) = \tilde{I}(u - \log s,\; \theta),
\end{equation}
i.e.\ a pure translation by $\log s$ along the $u$-axis. Combined with Eq.~\eqref{eq:rotation_shift}, a simultaneous rotation by $\phi$ and scaling by $s$ act as the joint 2-D translation $(u,\theta)\mapsto(u+\log s,\ \theta-\phi)$.
\end{proposition}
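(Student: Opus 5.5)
The plan is a direct substitution into the definition of the log-polar representation, as was done for rotation in Eq.~\eqref{eq:rotation_shift}. First I will state the log-polar map explicitly. Setting $r = e^{u}$ in Eq.~\eqref{eq:polar} gives
\[
\tilde{I}(u,\theta) = I\!\left(c_x + e^{u}\cos\theta,\; c_y + e^{u}\sin\theta\right).
\]
Because the object is centred, I take the scaling $I_s(\mathbf{p}) = I(\mathbf{p}/s)$ to act about the centre $\mathbf{c} = (c_x,c_y)$. Precisely, $I_s(\mathbf{c} + \mathbf{q}) = I(\mathbf{c} + \mathbf{q}/s)$, where $\mathbf{q}$ is the displacement from the centre. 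Making this convention explicit is the one point needing care, since scaling about any other point would not reduce to a pure radial shift.

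Next, substitute. We have $\tilde{I}_s(u,\theta) = I_s(\mathbf{c} + e^{u}(\cos\theta,\sin\theta)) = I(\mathbf{c} + (e^{u}/s)(\cos\theta,\sin\theta))$. Writing $e^{u}/s = e^{u - \log s}$, this equals $\tilde{I}(u - \log s, \theta)$, which is the claimed translation along $u$. The angle is untouched because scaling about the centre preserves direction.

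For the joint statement, let $T_{\phi,s} = R_\phi \circ S_s$, where $R_\phi$ is the rotation about $\mathbf{c}$. These two maps commute, since scalar multiplication commutes with any linear map. I will first apply the scaling result and then Eq.~\eqref{eq:rotation_shift}, which is valid verbatim in log-polar coordinates because it only involves $\theta$. This gives
\[
\tilde{I}_{\phi,s}(u,\theta) = \tilde{I}(u - \log s,\; \theta - \phi).
\]
So the transformed representation is the original translated by $(\log s, \phi)$. In the proposition's pullback convention the coordinates are read through the map $(u,\theta)\mapsto(u - \log s,\ \theta-\phi)$; the displayed signs $(u+\log s,\ \theta-\phi)$ mix pushforward and pullback conventions. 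I will add a short remark that the sign on $u$ depends on the convention, and that this does not matter for invariance, because the subsequent pooling discards the shift amount.

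The main obstacle is not algebraic. It is domain bookkeeping: $u = \log r$ is undefined at $r=0$, and the sampled range $[u_{\min}, u_{\max}]$ is finite. The shift is therefore exact on $\mathbb{R}\times[0,2\pi)$, but on the truncated grid content enters or leaves at the ends of the $u$-axis. I will state the proposition for the continuous, untruncated representation, which is what is claimed, and note that the boundary effect vanishes when the object's support lies strictly inside the annulus $e^{u_{\min}} < r < e^{u_{\max}}$ both before and after scaling.
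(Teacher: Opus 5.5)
Your proposal is correct and follows the paper's argument. The paper likewise notes that a point at radius $r$ moves to radius $sr$, so that $\log(sr)=\log s+\log r$ shifts the radial coordinate by $\log s$, and then reuses Eq.~\eqref{eq:rotation_shift} verbatim for $\theta$; your explicit substitution $r=e^{u}$ with scaling about $\mathbf{c}$ is the same computation written out. Your sign remark is also right: the paper's $u+\log s$ tracks where points move, while $\theta-\phi$ is the pulled-back argument, so the consistent pullback form is $(u-\log s,\ \theta-\phi)$, which does not affect the invariance used later.
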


\begin{proof}
A point at radius $r$ in $I$ appears at radius $sr$ in $I_s$; in log coordinates $\log(sr)=\log s + \log r$, hence the radial index is shifted by the constant $\log s$ independently of $\theta$. The angular argument is unchanged, and the rotation result of Eq.~\eqref{eq:rotation_shift} applies verbatim to the $\theta$-axis.
\end{proof}

The angular Fourier magnitude of Theorem~\ref{thm:invariance} already removes the $\theta$-translation. The remaining $u$-translation is absorbed by the spectral pooling of Section~\ref{subsec:stage3}: the global mean and global maximum over $u$ are invariant under any cyclic shift of $u$, and approximately invariant under a non-cyclic shift \emph{provided the object's support stays within the sampled window} $[u_{\min},u_{\max}]$. Consequently the S2P-Net feature is jointly rotation- and (window-limited) scale-invariant when log-polar sampling is used. Unlike rotation---which is exactly periodic---scale invariance is bounded: an object scaled until it exceeds the radial window (or shrinks below $r_{\min}$) loses energy to truncation, so the guarantee holds over a finite scale band rather than globally. Section~\ref{subsec:scale_results} measures this band empirically.

\section{S2P-Net Architecture}
\label{sec:arch}

S2P-Net processes a centred, square greyscale image $\mathbf{x} \in \R^{H \times H}$ through four sequential stages. The first three stages are parameter-free signal processing operations; only the fourth stage contains trainable parameters.

Figure~\ref{fig:architecture} gives an overview of the pipeline.

\begin{figure}[t]
\centering
\begin{tikzpicture}[
  node distance=0.45cm and 0.1cm,
  block/.style={rectangle, draw, rounded corners=3pt, fill=blue!10,
                text width=3.5cm, minimum height=0.8cm, align=center,
                font=\small},
  arrow/.style={-{Stealth[length=5pt]}, thick},
  label/.style={font=\scriptsize\itshape, text=gray}
]
  \node[block] (img) {Input Image\\$(B,1,128,128)$\\greyscale, centred};
  \node[block, below=of img] (polar) {Polar Transform\\$(B,1,64,128)$\\$r \times \theta$ grid};
  \node[block, below=of polar] (fft) {Harmonic Signature\\$(B,1,64,32)$\\$|\text{RFFT along }\theta|$};
  \node[block, below=of fft] (pool) {Spectral Pooling\\$(B,64)$\\mean + max over $r$};
  \node[block, below=of pool] (mlp)  {MLP Classifier\\$(B,C)$\\$64{\to}64{\to}32{\to}C$};

  \draw[arrow] (img)   -- node[label, right=2pt]{bilinear sampling} (polar);
  \draw[arrow] (polar) -- node[label, right=2pt]{1-D RFFT, modulus} (fft);
  \draw[arrow] (fft)   -- node[label, right=2pt]{global pooling}    (pool);
  \draw[arrow] (pool)  -- node[label, right=2pt]{trainable}         (mlp);

  \node[label, right=0.2cm of polar] (p0) {0 params};
  \node[label, right=0.2cm of fft]   (p1) {0 params};
  \node[label, right=0.2cm of pool]  (p2) {0 params};
  \node[label, right=0.2cm of mlp]   (p3) {\textbf{6,564 params}};
\end{tikzpicture}
\caption{S2P-Net pipeline. Three deterministic, parameter-free stages extract a rotation-invariant feature vector; a lightweight MLP performs classification.}
\label{fig:architecture}
\end{figure}

\subsection{Stage 1: Polar Transform Layer}
\label{subsec:stage1}

The input image $\mathbf{x} \in \R^{1 \times H \times H}$ is resampled onto a polar grid of $R \times \Theta$ positions. For each point $(r_i, \theta_j)$ in the output grid, the corresponding Cartesian position is
\begin{equation}
  x_{ij} = c_x + r_i \cos\theta_j, \quad y_{ij} = c_y + r_i \sin\theta_j,
\end{equation}
with $r_i$ sampled uniformly in $[0, H/2]$ and $\theta_j$ sampled uniformly in $[0, 2\pi)$. Pixel values at $(x_{ij}, y_{ij})$ are obtained via bilinear interpolation using PyTorch's \texttt{grid\_sample} with zero-padding outside the image boundary. The sampling grid is precomputed once at initialisation and stored as a non-trainable buffer, making the forward pass computationally efficient.

\textbf{Parameters: 0.} Output shape: $(B, 1, R, \Theta)$ with $R{=}64$, $\Theta{=}128$.

\paragraph{Log-polar variant.} The radial samples $r_i$ may be placed either uniformly in $[0, H/2]$ (the default) or logarithmically in $[r_{\min}, H/2]$ with $r_{\min}{=}2$ pixels. The logarithmic placement realises Proposition~\ref{prop:scale} and is the only change required to obtain the scale-invariant model evaluated in Section~\ref{subsec:scale_results}; the parameter count is unchanged.

\subsection{Stage 2: Harmonic Signature Layer}
\label{subsec:stage2}

For each radius bin $r_i$, the angular profile $\tilde{\mathbf{x}}(r_i, \cdot) \in \R^{\Theta}$ is transformed via the 1-D Real FFT:
\begin{equation}
  \mathbf{c}(r_i) = \rfft\!\left[\tilde{\mathbf{x}}(r_i, \cdot)\right] \in \mathbb{C}^{\Theta/2+1}.
\end{equation}
The magnitude $M(r_i, k) = |\mathbf{c}(r_i)[k]|$ is retained for the first $K_{\max} = 32$ frequency bins. By Theorem~\ref{thm:invariance}, this tensor is invariant to any rotation of the original image.

\textbf{Parameters: 0.} Output shape: $(B, 1, R, K)$ with $K{=}32$.

\subsection{Stage 3: Spectral Pooling}
\label{subsec:stage3}

The spectral tensor $M \in \R^{B \times 1 \times R \times K}$ is reduced to a fixed-length feature vector by applying global mean pooling and global max pooling across the radius dimension:
\begin{equation}
  \mathbf{f}_k = \left[\frac{1}{R}\sum_{r} M_{r,k},\;\; \max_{r} M_{r,k}\right] \in \R^{2K}.
\end{equation}
Concatenating both pooling statistics yields a $2K = 64$-dimensional vector that captures both the average symmetry content and the peak symmetry response at each harmonic order. Figure~\ref{fig:featurevec} visualises this vector for a hexagonal nut and confirms that it is, as Theorem~\ref{thm:invariance} predicts, essentially unchanged when the object is rotated (a relative $L_1$ deviation of at most $3.4\%$ across seven test angles, attributable to bilinear-resampling error alone).

\textbf{Parameters: 0.} Output shape: $(B, 64)$.

\begin{figure*}[t]
\centering
\includegraphics[width=0.92\textwidth]{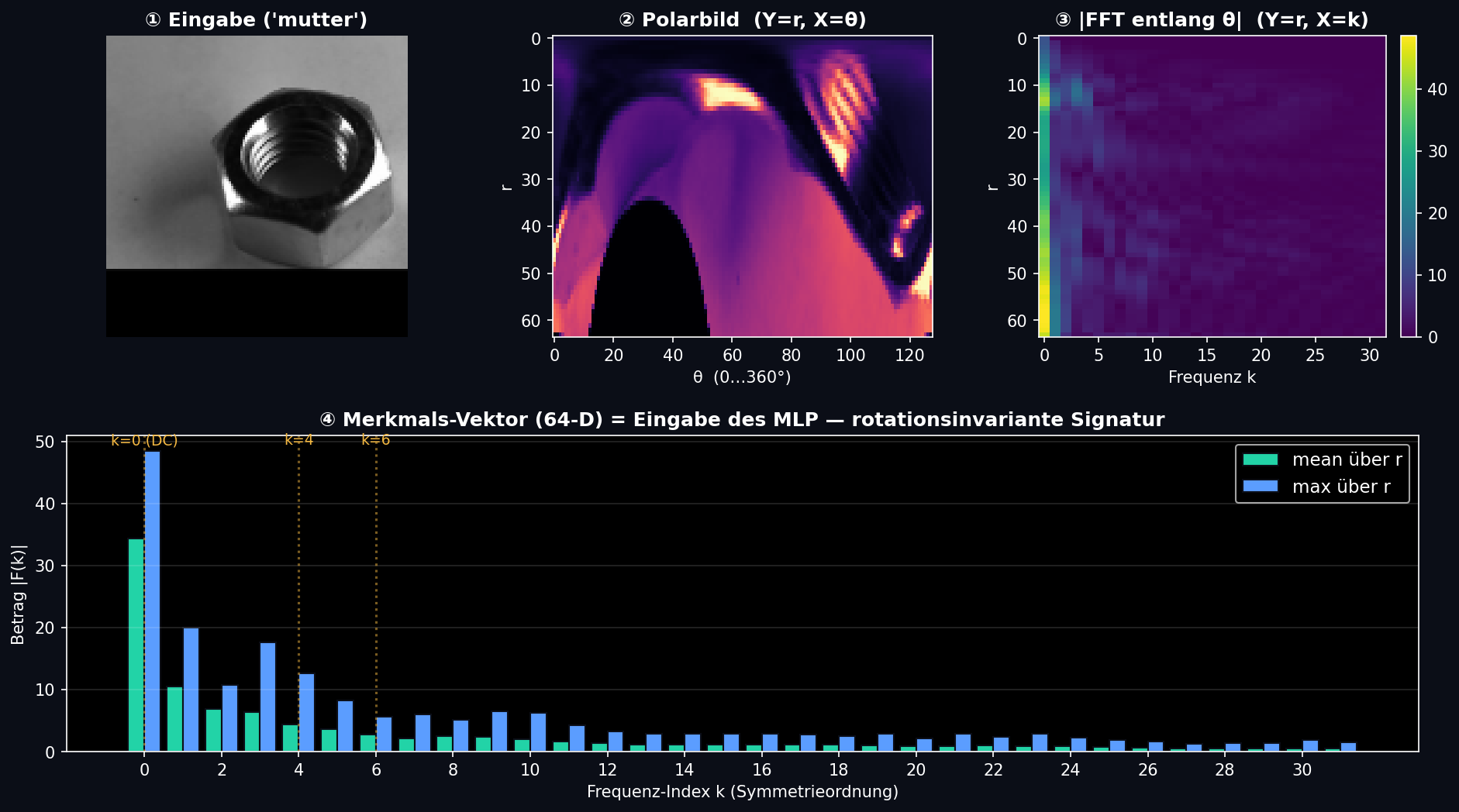}
\caption{The complete parameter-free pipeline for one hexagonal nut: \emph{(1)} the centred input, \emph{(2)} its polar image ($r$ vs.\ $\theta$), \emph{(3)} the angular FFT magnitude $|\mathcal{F}(r,k)|$, and \emph{(4)} the resulting 64-dimensional feature vector that is fed to the MLP (mean-over-$r$ and max-over-$r$ statistics per harmonic $k$). This vector is the rotation-invariant ``symmetry fingerprint'' of the object; the dotted markers highlight the $k{=}0$ (DC), $k{=}4$ and $k{=}6$ harmonics.}
\label{fig:featurevec}
\end{figure*}

\subsection{Stage 4: MLP Classifier}
\label{subsec:stage4}

The spectral feature vector is processed by a three-layer MLP:
\begin{align}
  \mathbf{h}_1 &= \text{ReLU}\!\left(\text{BN}\!\left(\mathbf{W}_1 \mathbf{f} + \mathbf{b}_1\right)\right),\quad \text{Dropout}(0.3),\\
  \mathbf{h}_2 &= \text{ReLU}\!\left(\text{BN}\!\left(\mathbf{W}_2 \mathbf{h}_1 + \mathbf{b}_2\right)\right),\quad \text{Dropout}(0.2),\\
  \hat{\mathbf{y}} &= \mathbf{W}_3 \mathbf{h}_2 + \mathbf{b}_3,
\end{align}
with hidden dimensions 64 and 32. All linear layers use Kaiming normal initialisation \citep{he2015delving}. Batch normalisation \citep{ioffe2015batch} stabilises training on the small dataset.

\textbf{Trainable parameters: 6,564.}

\subsection{Parameter Analysis}
\label{subsec:params}

Table~\ref{tab:params} compares the parameter count of S2P-Net with the CNN baseline (Section~\ref{sec:experiments}). The $323\times$ reduction is not merely a design choice but a direct consequence of the theoretical framework: because rotation invariance is achieved analytically in stages 1--3, the classifier sees only a 64-dimensional, rotation-free feature vector regardless of the input resolution. A conventional CNN must encode rotation invariance implicitly in its weights, requiring proportionally more capacity.

\begin{table}[htbp]
\centering
\caption{Trainable parameter count comparison.}
\label{tab:params}
\begin{tabular}{lrr}
\toprule
Module & S2P-Net & SimpleCNN \\
\midrule
Feature extractor & 0 & 2,113,120 \\
Classifier        & 6,564 & 8,196 \\
\midrule
\textbf{Total}    & \textbf{6,564} & \textbf{2,121,316} \\
Ratio             & $1\times$ & $323\times$ \\
\bottomrule
\end{tabular}
\end{table}

\section{Experiments}
\label{sec:experiments}

\subsection{Dataset}
\label{subsec:dataset}

We collected a dataset of four industrial object categories relevant to robotic pick-and-place tasks:
\begin{itemize}[leftmargin=1.2em]
  \item \textbf{Mutter} (hexagonal nut, 6-fold symmetry): 15 images
  \item \textbf{Stecker} (electrical connector, 1-fold symmetry): 20 images
  \item \textbf{Unterlegscheibe} (circular washer, $\infty$-fold symmetry): 18 images
  \item \textbf{Würfel} (cube face, 4-fold symmetry): 27 images
\end{itemize}
Total: 80 images. All images were captured at $1280 \times 960$ pixels with a fixed overhead camera. Objects were placed on a uniform background. Preprocessing applied Otsu thresholding for background removal, followed by morphological dilation to fill gaps, bounding-box extraction, and centred resizing to $128 \times 128$ pixels with greyscale conversion.

\subsection{Training Setup}
\label{subsec:training}

\paragraph{Data splits.}
Two experimental splits are used. In the \emph{Low-Data} experiment, exactly 3 images per class are used for training (12 images total); the remaining 68 images form the held-out test set (12 Mutter, 17 Stecker, 15 Unterlegscheibe, 24 Würfel). In the \emph{Full-Data} experiment, a 75\%/25\% stratified split yields approximately 62 training and 18 test images.

\paragraph{Augmentation.}
Training images are augmented on-the-fly by a factor of $50\times$ per epoch. For the \emph{Low-Data} experiment, augmentation consists of brightness jitter ($\pm35\%$), Gaussian noise ($\sigma \leq 0.04$), scaling ($[0.88, 1.12]$), translation ($\pm6\%$), and mild contrast shift---no rotations. For the \emph{Full-Data} experiment, uniform random rotation in $[0^\circ, 360^\circ]$ is additionally applied. Test images are \emph{not} augmented; the evaluation protocol is described in Section~\ref{subsec:eval}.

\paragraph{Optimisation.}
Both models are trained with:
\begin{itemize}[leftmargin=1.2em]
  \item Optimiser: AdamW \citep{loshchilov2019decoupled}, $\eta = 10^{-3}$, weight decay $10^{-3}$
  \item Loss: Focal Loss \citep{lin2017focal} ($\gamma = 2$) for S2P-Net;
        cross-entropy for SimpleCNN
  \item Scheduler: Cosine Annealing with Warm Restarts \citep{loshchilov2017sgdr},
        $T_0 = 20$ epochs, $T_\mathrm{mult} = 2$
  \item Batch size: 16; maximum epochs: 200; early stopping patience: 30
  \item Mixed-precision training (FP16) with gradient scaling (GPU only)
\end{itemize}

\subsection{Baseline Architecture}
\label{subsec:baseline}

The CNN baseline (SimpleCNN) consists of three convolutional blocks, each with a $3\times3$ convolution, batch normalisation, ReLU activation, and $2\times2$ max pooling. Channel widths progress $1 \to 16 \to 32 \to 64$. After three pooling steps, the feature map is $16 \times 16$, which is flattened and classified by a two-layer MLP with 128 hidden units and 40\% dropout. This architecture has 2,121,316 parameters and serves as a representative standard CNN for the same task.

\subsection{Evaluation Protocol}
\label{subsec:eval}

After training, both models were evaluated at 12 equally-spaced rotation angles $\{0^\circ, 30^\circ, \ldots, 330^\circ\}$. At each angle $\phi$, every test image was rotated by $\phi$ about its centre using bilinear interpolation before inference. No rotation information was provided to the models; no additional augmentation was applied to the test images. The per-angle accuracy is computed over the complete held-out test set ($N = 68$ images for the Low-Data experiment, $N = 18$ for the Full-Data experiment). This protocol directly measures whether each model has achieved orientation-independent recognition or merely memorised the training orientations.

\section{Results}
\label{sec:results}

\subsection{Full-Data Training with Rotation Augmentation}
\label{subsec:full_data}

When rotation augmentation is included during training, both models converge to 100\% per-angle accuracy across all 12 test angles (Table~\ref{tab:full_data_angles}). This confirms that (a) the task is solvable with sufficient data, (b) both architectures have enough capacity to learn the four classes, and (c) the evaluation protocol is sound. Training curves for this experiment are shown in Figure~\ref{fig:training_full}.

\begin{table}[htbp]
\centering
\caption{Per-angle accuracy (\%) under full-data training with rotation augmentation. Both models achieve perfect accuracy at all orientations ($N=18$ test images).}
\label{tab:full_data_angles}
\begin{tabular}{lcc}
\toprule
Angle & S2P-Net & SimpleCNN \\
\midrule
$0^\circ$   & 100.0 & 100.0 \\
$30^\circ$  & 100.0 & 100.0 \\
$60^\circ$  & 100.0 & 100.0 \\
$90^\circ$  & 100.0 & 100.0 \\
$120^\circ$ & 100.0 & 100.0 \\
$150^\circ$ & 100.0 & 100.0 \\
$180^\circ$ & 100.0 & 100.0 \\
$210^\circ$ & 100.0 & 100.0 \\
$240^\circ$ & 100.0 & 100.0 \\
$270^\circ$ & 100.0 & 100.0 \\
$300^\circ$ & 100.0 & 100.0 \\
$330^\circ$ & 100.0 & 100.0 \\
\midrule
\textbf{Mean} & \textbf{100.0} & \textbf{100.0} \\
\textbf{Std}  & 0.0 & 0.0 \\
\bottomrule
\end{tabular}
\end{table}

\subsection{Low-Data Setting without Rotation Augmentation}
\label{subsec:few_shot}

The critical experiment measures what happens when rotation augmentation is withheld---the exact scenario one encounters when a new part type is introduced to a system with limited labeled data. Only 3 images per class (12 total) are used for training; the remaining 68 images serve as the test set. Table~\ref{tab:few_shot_angles} and Figure~\ref{fig:rotation_few_shot} present the per-angle accuracy for both models.

\begin{table}[htbp]
\centering
\caption{Per-angle accuracy (\%) in the low-data setting without rotation augmentation
($N=68$ held-out test images: 12 Mutter, 17 Stecker, 15 Unterlegscheibe, 24 Würfel).
S2P-Net maintains stable accuracy; the CNN degrades catastrophically at
$120^\circ$--$210^\circ$.}
\label{tab:few_shot_angles}
\begin{tabular}{lccc}
\toprule
Angle & S2P-Net & SimpleCNN & $\Delta$ \\
\midrule
$0^\circ$   & 72.1 & 89.7 & $-17.6$ \\
$30^\circ$  & 73.5 & 89.7 & $-16.2$ \\
$60^\circ$  & 72.1 & 76.5 & $-4.4$  \\
$90^\circ$  & 70.6 & 64.7 & $+5.9$  \\
$120^\circ$ & 69.1 & 50.0 & $+19.1$ \\
$150^\circ$ & 70.6 & 45.6 & $+25.0$ \\
$180^\circ$ & \textbf{75.0} & \textbf{19.1} & $\mathbf{+55.9}$ \\
$210^\circ$ & 69.1 & 27.9 & $+41.2$ \\
$240^\circ$ & 70.6 & 36.8 & $+33.8$ \\
$270^\circ$ & 70.6 & 70.6 & $0.0$   \\
$300^\circ$ & 70.6 & 73.5 & $-2.9$  \\
$330^\circ$ & 70.6 & 76.5 & $-5.9$  \\
\midrule
\textbf{Mean} & \textbf{71.2} & \textbf{60.0} & $\mathbf{+11.2}$ \\
\textbf{Std}  & \textbf{1.6}  & \textbf{22.9} & --- \\
\bottomrule
\end{tabular}
\end{table}

Several patterns are immediately apparent:

\textbf{S2P-Net maintains a flat accuracy profile.} Across all 12 angles, S2P-Net varies between $69.1\%$ (at $120^\circ$ and $210^\circ$) and $75.0\%$ (at $180^\circ$), a range of only $5.9$ pp with standard deviation $1.6\%$. This near-constant profile is the empirical signature of true rotation invariance: performance does not depend on which angle was seen at training time.

\textbf{The CNN degrades catastrophically near $180^\circ$.} The CNN performs well at $0^\circ$ and $30^\circ$ (89.7\%), because those orientations closely resemble the (non-rotated) training distribution. As the test angle diverges from the training distribution, performance falls sharply, reaching a minimum of $19.1\%$ at $180^\circ$---barely above the $25\%$ random-chance baseline for four classes.\footnote{A network that scores $89.7\%$ upright and $19.1\%$ upside-down has not really learned to recognise the parts; it has learned which way is up.} The CNN partially recovers near $270^\circ$ (70.6\%), consistent with the fact that a $270^\circ$ rotation is equivalent to a $90^\circ$ rotation, which preserves many visual features.

\textbf{Summary statistics.} S2P-Net achieves a mean accuracy of $71.2\%$ with $\sigma = 1.6\%$. The CNN achieves $60.0\%$ with $\sigma = 22.9\%$. The $14\times$ higher standard deviation of the CNN directly quantifies its orientation sensitivity.

\begin{figure}[t]
\centering
\includegraphics[width=\linewidth]{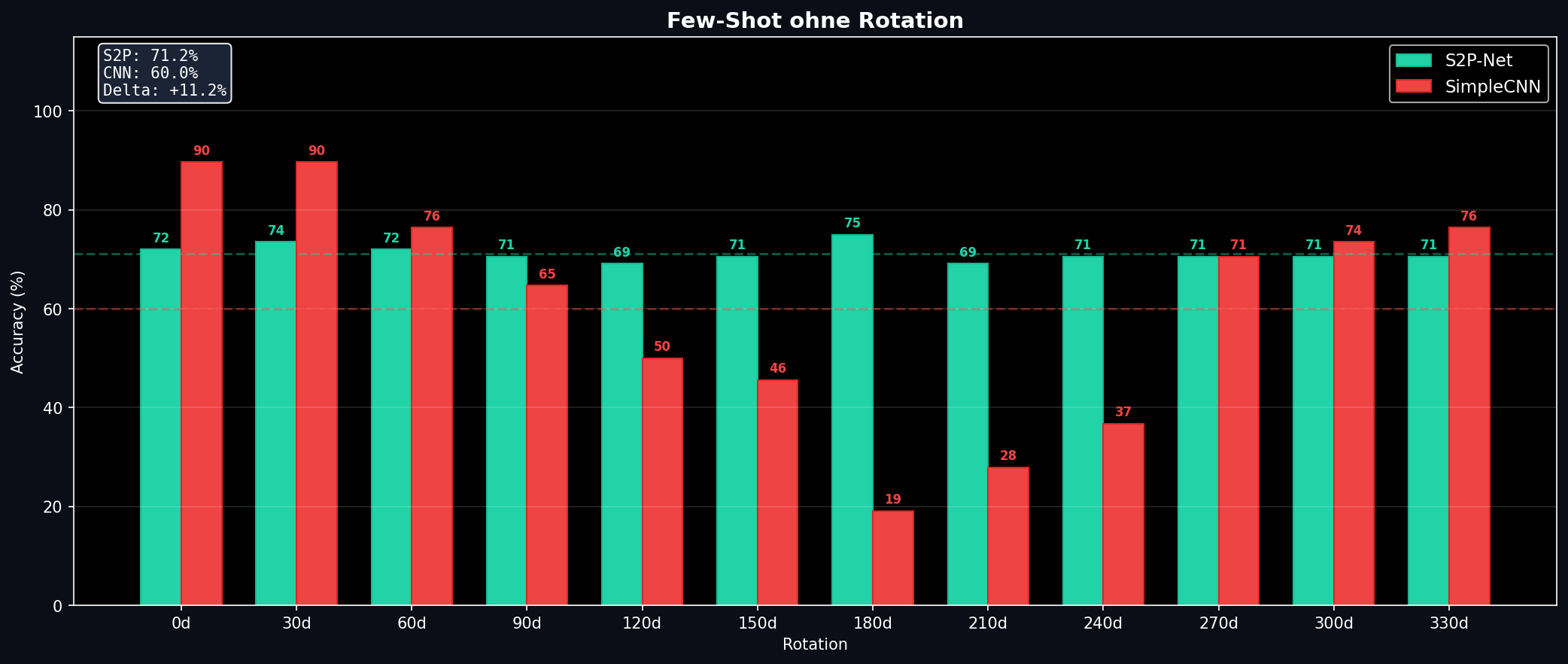}
\caption{Per-angle classification accuracy in the low-data setting without rotation augmentation. S2P-Net (teal) maintains a stable profile across all orientations. The CNN (red) performs well near the training distribution but collapses to near-chance at $180^\circ$.}
\label{fig:rotation_few_shot}
\end{figure}

\begin{figure}[t]
\centering
\includegraphics[width=\linewidth]{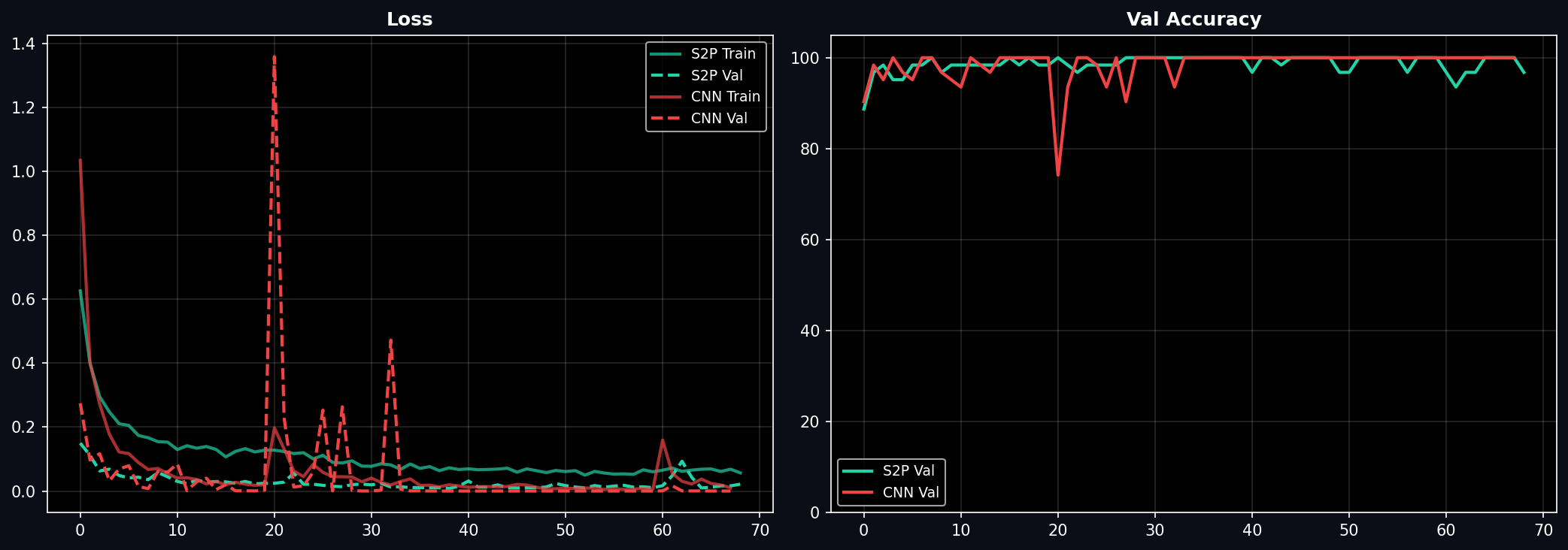}
\caption{Training and validation curves for the full-data experiment with rotation augmentation. Both models converge to perfect validation accuracy.}
\label{fig:training_full}
\end{figure}

\begin{figure}[t]
\centering
\includegraphics[width=\linewidth]{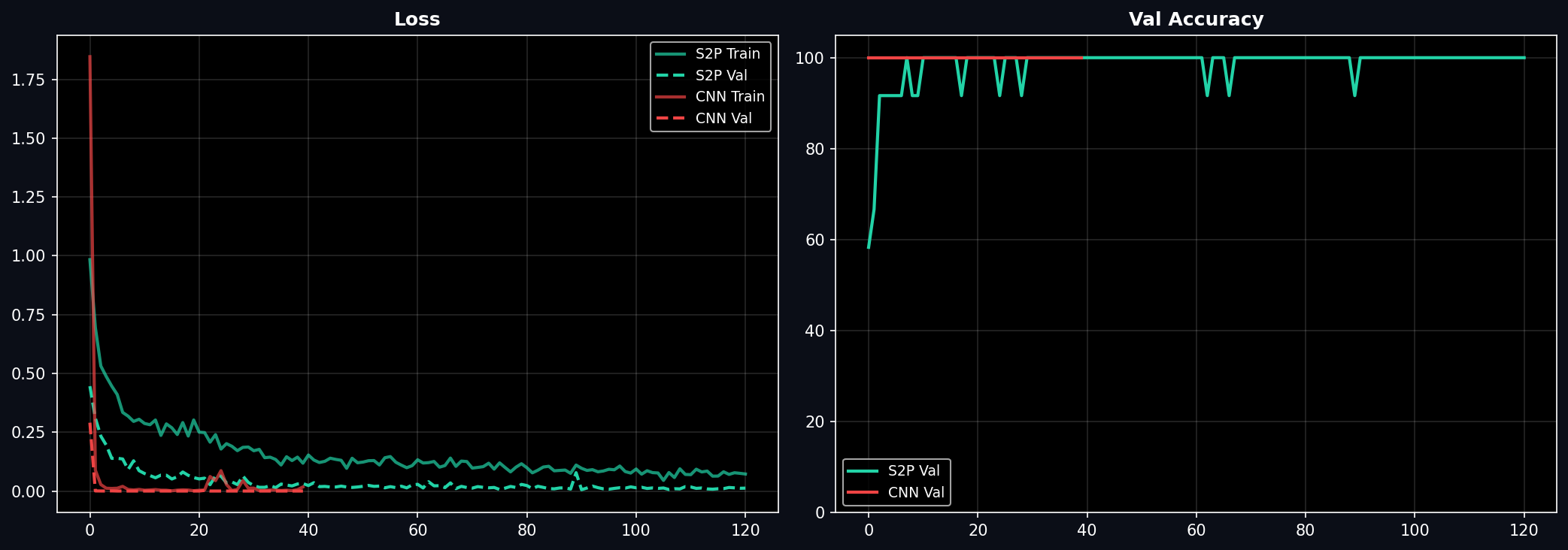}
\caption{Training curves for the low-data experiment without rotation augmentation. S2P-Net converges stably; the CNN shows higher variance due to the extremely limited training set (3 images per class).}
\label{fig:training_few_shot}
\end{figure}

\begin{figure}[t]
\centering
\includegraphics[width=\linewidth]{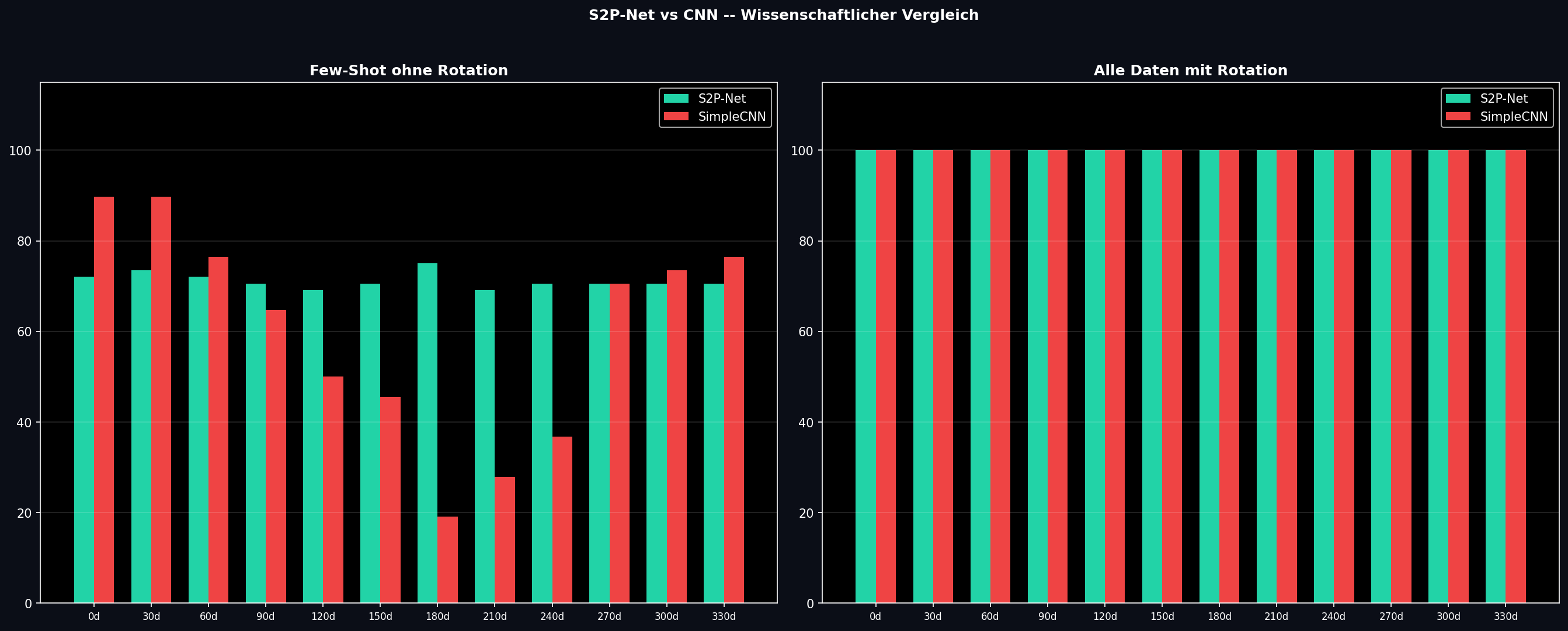}
\caption{Side-by-side comparison of per-angle accuracy for both experiments. Left: low-data setting without rotation augmentation (S2P-Net dominates at mid-angles). Right: full-data with rotation augmentation (both models perfect).}
\label{fig:full_comparison}
\end{figure}

\subsection{Scale invariance via log-polar sampling}
\label{subsec:scale_results}

To test Proposition~\ref{prop:scale} we evaluate three models under a unified protocol: the standard linear-polar S2P-Net, the log-polar S2P-Net, and the SimpleCNN baseline. All three are trained on a $75/25$ split of the four-class dataset with rotation ($0^\circ$--$360^\circ$) and photometric augmentation, but \emph{without} any scale or translation augmentation, so that scale robustness must arise by construction rather than from training exposure. To control for the stochasticity criticised in Section~\ref{sec:discussion}, every result in this section and the next is averaged over \textbf{five independent seeds} (with re-randomised train/test splits); we report the mean and, as $\sigma$, the spread across the sweep.

At test time each held-out image is rescaled about its centre by a factor $s\in\{0.6,\dots,1.4\}$ and, at each scale, evaluated at all twelve rotation angles; the reported per-scale accuracy is the mean over these rotations and over the test set. Table~\ref{tab:scale} and Figure~\ref{fig:scale} summarise the outcome.

\begin{table}[htbp]
\centering
\caption{Per-scale accuracy (\%, mean$\pm$std over 5 seeds, each averaged over 12 rotations) under rotation-only training. The log-polar variant is the most scale-stable; the CNN collapses when the object shrinks below its training scale.}
\label{tab:scale}
\small
\begin{tabular}{lccc}
\toprule
Scale & S2P-Log & S2P-Linear & CNN \\
\midrule
$0.6\times$ & \textbf{72.6}\,$\pm$\,10.8 & 52.2\,$\pm$\,2.7 & 20.8\,$\pm$\,2.4 \\
$0.7\times$ & \textbf{72.2}\,$\pm$\,11.7 & 61.6\,$\pm$\,7.3 & 37.4\,$\pm$\,6.6 \\
$0.8\times$ & 71.8\,$\pm$\,11.1 & 60.7\,$\pm$\,12.9 & \textbf{87.3}\,$\pm$\,4.0 \\
$0.9\times$ & 79.1\,$\pm$\,9.5 & 74.6\,$\pm$\,15.2 & \textbf{97.6}\,$\pm$\,2.8 \\
$1.0\times$ & 93.5\,$\pm$\,5.4 & 95.6\,$\pm$\,5.4 & \textbf{99.8}\,$\pm$\,0.2 \\
$1.1\times$ & 91.8\,$\pm$\,6.7 & 94.1\,$\pm$\,7.3 & \textbf{98.4}\,$\pm$\,1.5 \\
$1.2\times$ & 94.0\,$\pm$\,5.1 & \textbf{95.9}\,$\pm$\,5.0 & 97.6\,$\pm$\,3.7 \\
$1.3\times$ & 92.1\,$\pm$\,5.0 & \textbf{93.9}\,$\pm$\,4.9 & 89.9\,$\pm$\,3.8 \\
$1.4\times$ & \textbf{89.9}\,$\pm$\,3.3 & 89.8\,$\pm$\,5.9 & 81.0\,$\pm$\,3.1 \\
\midrule
\textbf{Mean} & \textbf{84.1} & 79.8 & 78.9 \\
\textbf{Std}  & \textbf{12.5} & 18.6 & 27.7 \\
\bottomrule
\end{tabular}
\end{table}

Three observations stand out. First, the log-polar variant has both the \emph{highest} mean accuracy ($84.1\%$) and the \emph{lowest} spread ($\sigma=12.5\%$) of the three models---the empirical signature of scale invariance, exactly mirroring the flat-profile result for rotation. Second, the CNN reproduces its rotation pathology in the scale domain: it is near-perfect at and above its training scale ($99.8\%$ at $1.0\times$) but collapses to $20.8\%$ at $0.6\times$, because a strongly shrunken object lies far outside the training distribution. Third, the advantage of log-polar over linear sampling is concentrated in the down-scaling regime ($s\le 0.8$: $+19.6$ to $+20.4$\,pp), where the shrunken object's energy migrates toward the centre and stays within the radial window---precisely the band predicted by Proposition~\ref{prop:scale}. For up-scaling, all models eventually decline as the object is clipped by the image border, a sensor-field-of-view limit rather than a model limit.

\begin{figure}[t]
\centering
\includegraphics[width=\linewidth]{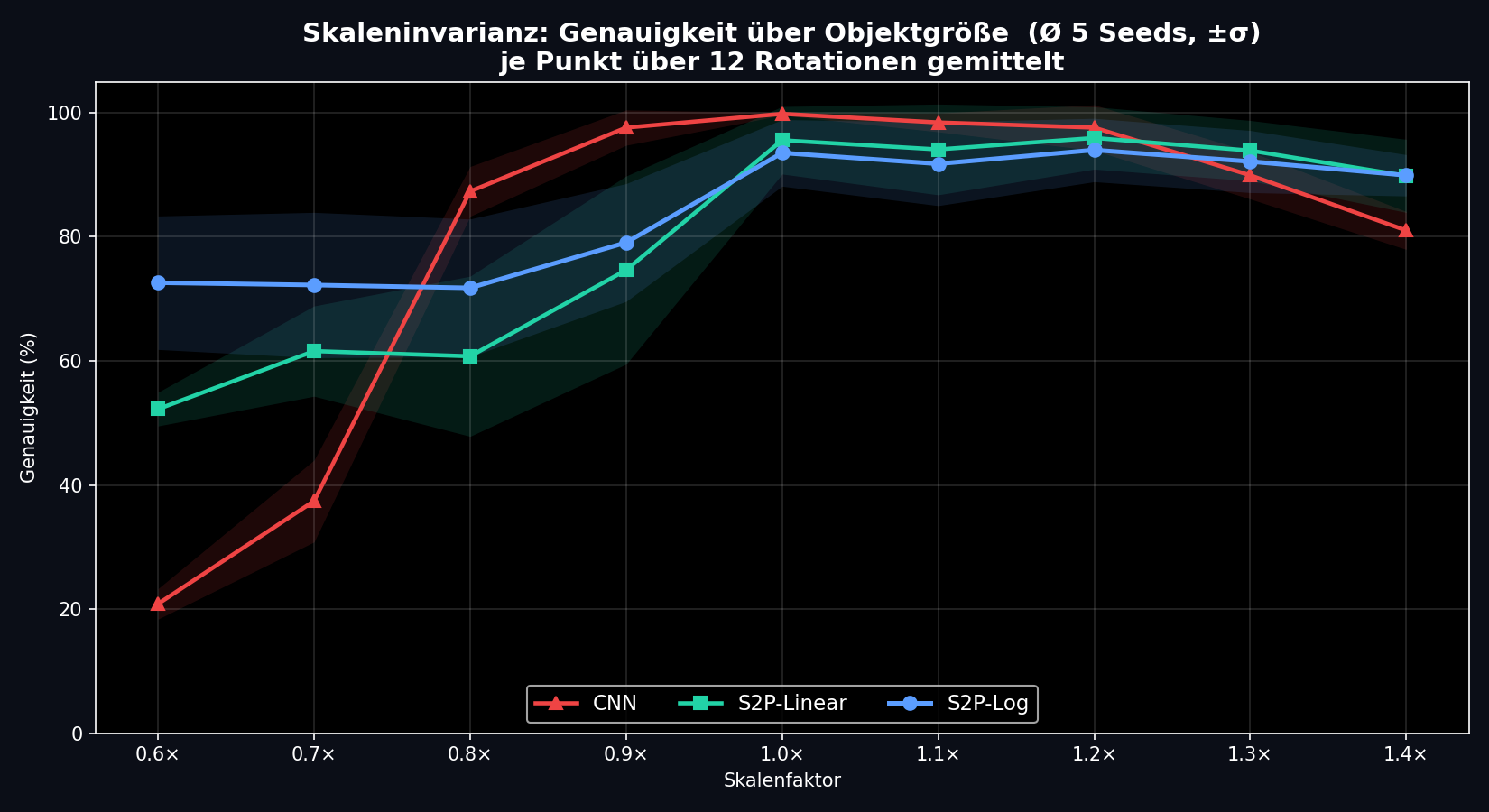}
\caption{Accuracy versus object scale (5 seeds, shaded $\pm\sigma$; each point averaged over 12 rotations). The log-polar S2P-Net (blue) maintains a flat profile, while the CNN (red) collapses for shrunken objects despite being strongest at its training scale.}
\label{fig:scale}
\end{figure}

\subsection{Centering robustness}
\label{subsec:center_results}

S2P-Net's only structural assumption is that the object is centred, since the polar origin is fixed to the image centre. We quantify the cost of violating this assumption by translating each held-out image by an offset of $0$--$20\%$ of the image width (averaged over four cardinal directions), again over five seeds (Table~\ref{tab:center}, Figure~\ref{fig:center}).

\begin{table}[htbp]
\centering
\caption{Per-offset accuracy (\%, mean$\pm$std over 5 seeds, 4 directions) under de-centering. All models degrade; the CNN is the most translation-tolerant, confirming centering as S2P-Net's principal limitation.}
\label{tab:center}
\small
\begin{tabular}{lccc}
\toprule
Offset & S2P-Log & S2P-Linear & CNN \\
\midrule
$0\%$  & 93.3\,$\pm$\,5.4 & 95.6\,$\pm$\,5.4 & \textbf{100.0}\,$\pm$\,0.0 \\
$5\%$  & 86.1\,$\pm$\,4.2 & 84.2\,$\pm$\,6.5 & \textbf{95.8}\,$\pm$\,1.8 \\
$10\%$ & 70.8\,$\pm$\,2.8 & 65.0\,$\pm$\,9.9 & \textbf{83.3}\,$\pm$\,3.0 \\
$15\%$ & 55.3\,$\pm$\,5.5 & 41.4\,$\pm$\,8.9 & \textbf{62.5}\,$\pm$\,2.9 \\
$20\%$ & 40.8\,$\pm$\,7.5 & 38.6\,$\pm$\,9.0 & \textbf{51.1}\,$\pm$\,4.1 \\
\midrule
\textbf{Mean} & 69.3 & 64.9 & \textbf{78.6} \\
\bottomrule
\end{tabular}
\end{table}

Unlike the rotation and scale results, here the CNN is the most robust at every offset, because its convolutional stack is translation-equivariant and only its final fully-connected layer is position-sensitive. S2P-Net, by contrast, recomputes an incorrect polar profile once the object leaves the centre, and its accuracy falls below the CNN's beyond a $5\%$ offset. The effect is real but graceful for small offsets ($\le 5\%$), and the log-polar variant degrades more gently than the linear one (e.g.\ $+13.9$\,pp at $15\%$), because its radial compression weights the well-sampled central region most heavily. This experiment turns the qualitative ``centering requirement'' of prior discussion into a concrete operating bound; Section~\ref{subsec:frontend} shows that a simple centroid front-end removes the dependence entirely.

\begin{figure}[t]
\centering
\includegraphics[width=\linewidth]{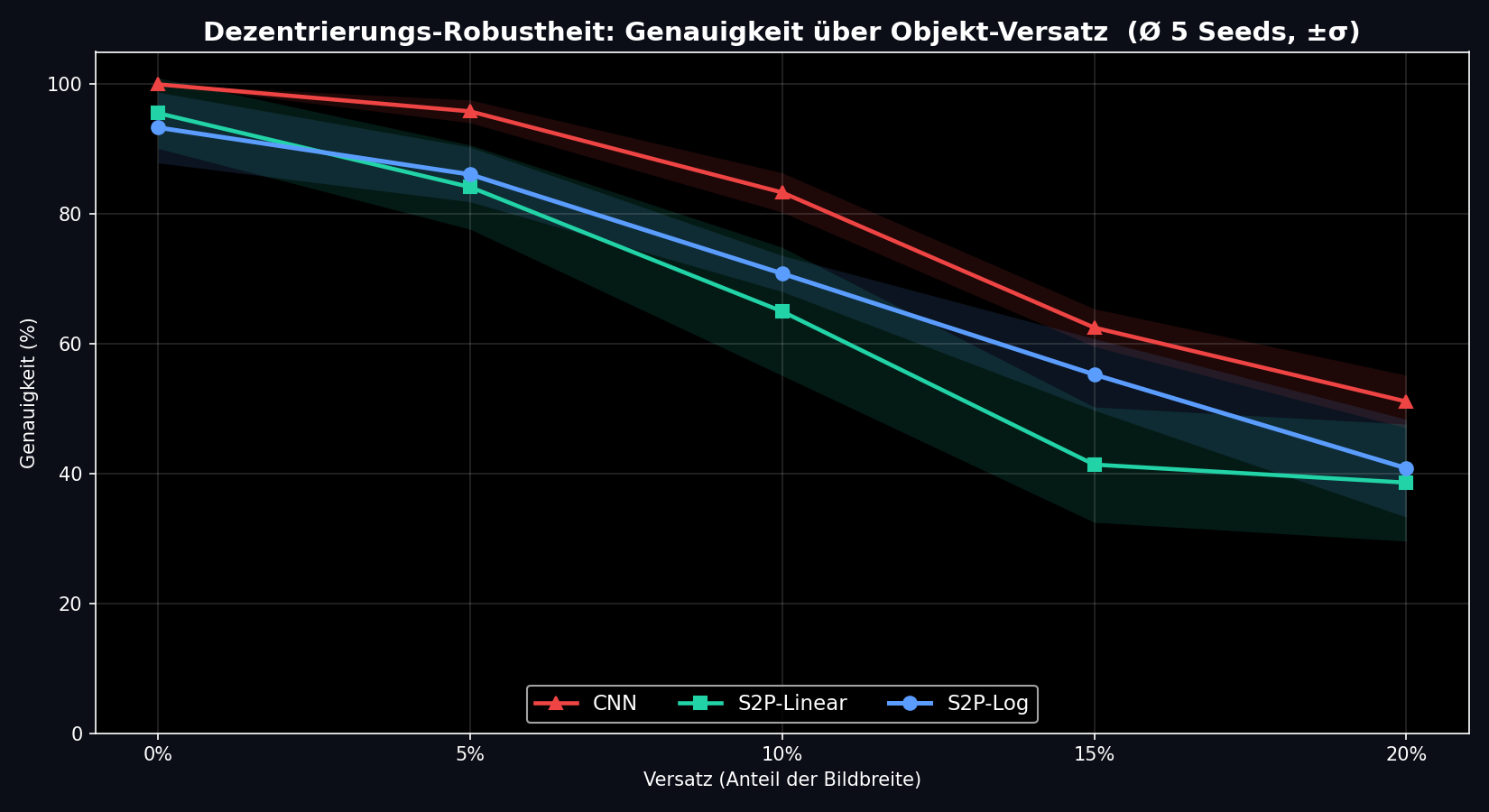}
\caption{Accuracy versus object de-centering (5 seeds, shaded $\pm\sigma$). All models degrade; the translation-tolerant CNN leads, quantifying S2P-Net's centering assumption as its principal limitation.}
\label{fig:center}
\end{figure}

\subsection{A centroid front-end removes the centering dependence}
\label{subsec:frontend}

The centering sensitivity of Section~\ref{subsec:center_results} is not intrinsic to the recognition task---it is an artefact of feeding the network a mis-aligned crop. Since the polar origin only has to coincide with the object centroid, a lightweight detector can restore the assumption before inference. We implement such a front-end with the same Otsu-threshold-plus-image-moments computation already used in preprocessing: it locates the object in the raw frame, crops a square window about its centroid, and resizes to $128\times128$. We then repeat the de-centering protocol on the raw images, where ``$0$--$20\%$ offset'' now denotes a mis-aligned crop window (the failure mode of a naive fixed-window grab), and compare the raw input against the front-end-corrected input.

\begin{table}[htbp]
\centering
\caption{Accuracy (\%, mean$\pm$std over 5 seeds) under a mis-aligned crop, with and without the centroid front-end. The front-end restores the centred-baseline accuracy at every offset.}
\label{tab:frontend}
\small
\begin{tabular}{lccc}
\toprule
Offset & S2P-Log + FE & S2P-Log (raw) & CNN (raw) \\
\midrule
$0\%$  & 93.3\,$\pm$\,5.4 & 93.3\,$\pm$\,5.4 & \textbf{100.0}\,$\pm$\,0.0 \\
$5\%$  & \textbf{93.3}\,$\pm$\,5.4 & 91.1\,$\pm$\,3.0 & 92.5\,$\pm$\,2.6 \\
$10\%$ & \textbf{93.3}\,$\pm$\,5.4 & 72.5\,$\pm$\,2.7 & 84.4\,$\pm$\,1.0 \\
$15\%$ & \textbf{93.3}\,$\pm$\,5.4 & 56.9\,$\pm$\,4.6 & 68.6\,$\pm$\,4.5 \\
$20\%$ & \textbf{93.3}\,$\pm$\,5.4 & 42.8\,$\pm$\,8.0 & 56.7\,$\pm$\,5.2 \\
\midrule
\textbf{Mean} & \textbf{93.3} & 71.3 & 80.4 \\
\bottomrule
\end{tabular}
\end{table}

Table~\ref{tab:frontend} and Figure~\ref{fig:frontend} show that the front-end makes S2P-Net's accuracy \emph{independent of the crop offset}: it holds a flat $93.3\%$---its centred baseline---across the entire sweep, whereas the raw input collapses to $42.8\%$ at a $20\%$ offset. Beyond a $5\%$ offset the corrected S2P-Net also overtakes the CNN ($93.3\%$ vs.\ $84.4\%$ at $10\%$, $93.3\%$ vs.\ $56.7\%$ at $20\%$), because the CNN receives the still-mis-aligned crop. The de-centering limitation is therefore reduced to the reliability of object detection, which on single-object, uniform-background industrial scenes is essentially solved. This result removes the principal practical objection to S2P-Net at negligible cost.

\begin{figure}[t]
\centering
\includegraphics[width=\linewidth]{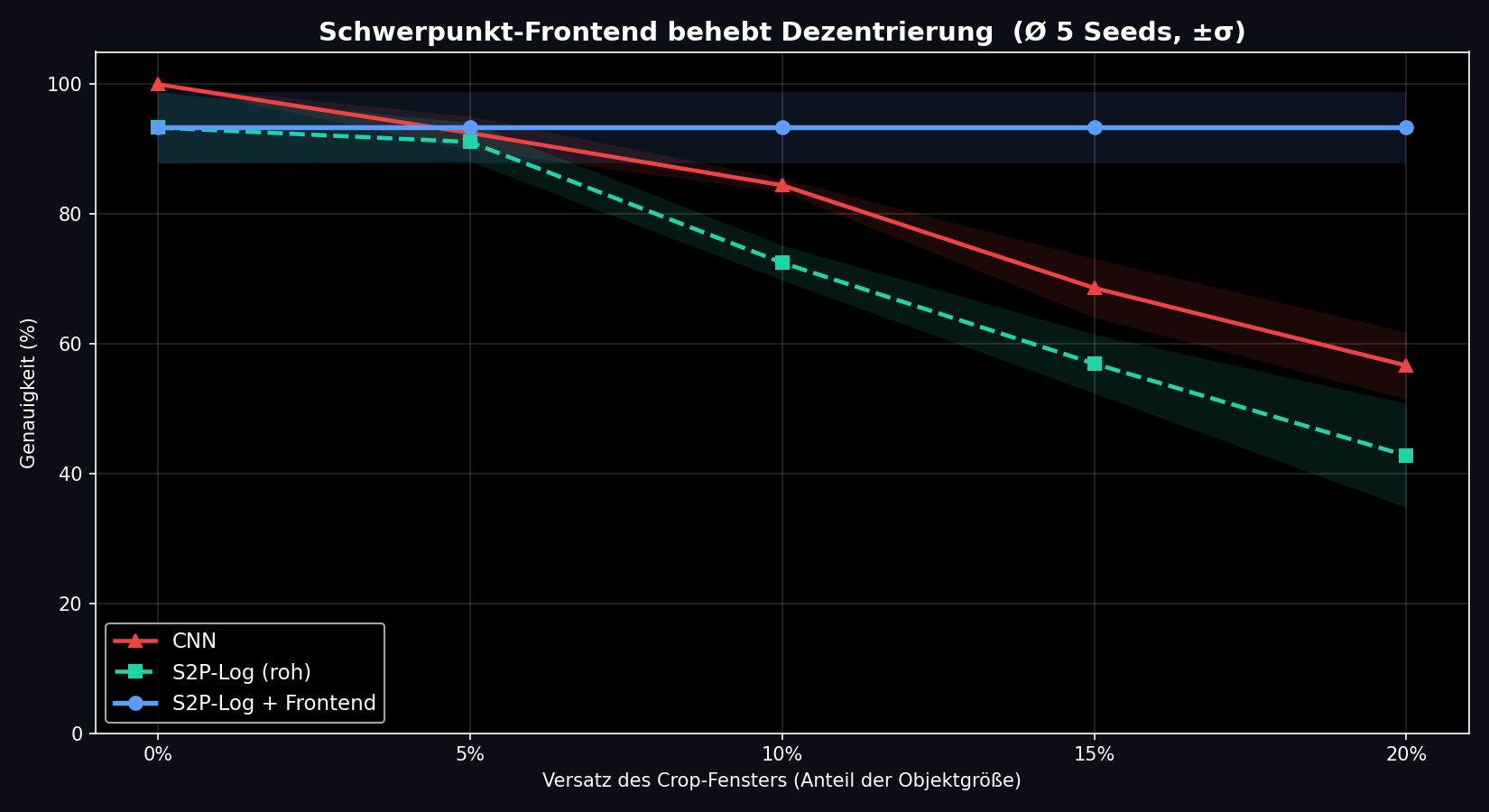}
\caption{Centroid front-end (blue) versus raw mis-aligned input (5 seeds, shaded $\pm\sigma$). With detection, S2P-Net's accuracy is flat across all offsets and surpasses the CNN once the object is meaningfully off-centre.}
\label{fig:frontend}
\end{figure}

\subsection{Robustness to image degradations}
\label{subsec:robustness}

The invariances established so far are \emph{geometric}. A natural question is whether the spectral representation also confers robustness to \emph{photometric and structural} corruptions encountered in practice. We test three, each as a severity sweep applied at test time only (training saw rotation and mild photometric augmentation with noise $\sigma\le0.04$, but none of these stronger corruptions), again over five seeds and averaged over twelve rotations: additive Gaussian noise, square occlusion of a given area fraction, and linear motion blur. Table~\ref{tab:robust} reports the mean accuracy over each severity sweep; Figure~\ref{fig:robust} shows the full curves.

\begin{table}[htbp]
\centering
\caption{Robustness: mean accuracy (\%) over each degradation sweep (5 seeds, averaged over 12 rotations). Best per row in bold. The spectral models are competitive on noise but markedly weaker under occlusion.}
\label{tab:robust}
\small
\begin{tabular}{lccc}
\toprule
Degradation & S2P-Log & S2P-Linear & CNN \\
\midrule
Gaussian noise   & 45.5 & \textbf{51.5} & 47.3 \\
Occlusion        & 54.5 & 63.3 & \textbf{76.7} \\
Motion blur      & 78.5 & 73.2 & \textbf{87.1} \\
\bottomrule
\end{tabular}
\end{table}

The result is informative precisely because it is \emph{not} a clean win for S2P-Net. Under additive noise the three models are comparable, with the linear-polar variant marginally ahead ($51.5\%$ vs.\ $47.3\%$ for the CNN), as the averaging in the spectral pooling partially suppresses zero-mean noise; beyond $\sigma=0.1$ all three collapse toward chance. Under \emph{occlusion} the CNN is clearly the most robust at every severity ($76.7\%$ mean vs.\ $54.5\%$ for S2P-Log), and under motion blur it again leads ($87.1\%$), although the log-polar variant degrades most gracefully among the spectral models. The reason is structural: a \emph{localised} corruption such as occlusion alters the angular profile only over a limited arc, but the 1-D FFT spreads that local change across \emph{all} frequency bins, perturbing the entire magnitude spectrum; a convolutional network, whose receptive fields remain local, loses only the features overlapping the occluded region. S2P-Net thus trades robustness to local corruption for analytic geometric invariance---a profile complementary to that of CNNs.

\begin{figure*}[t]
\centering
\includegraphics[width=0.32\textwidth]{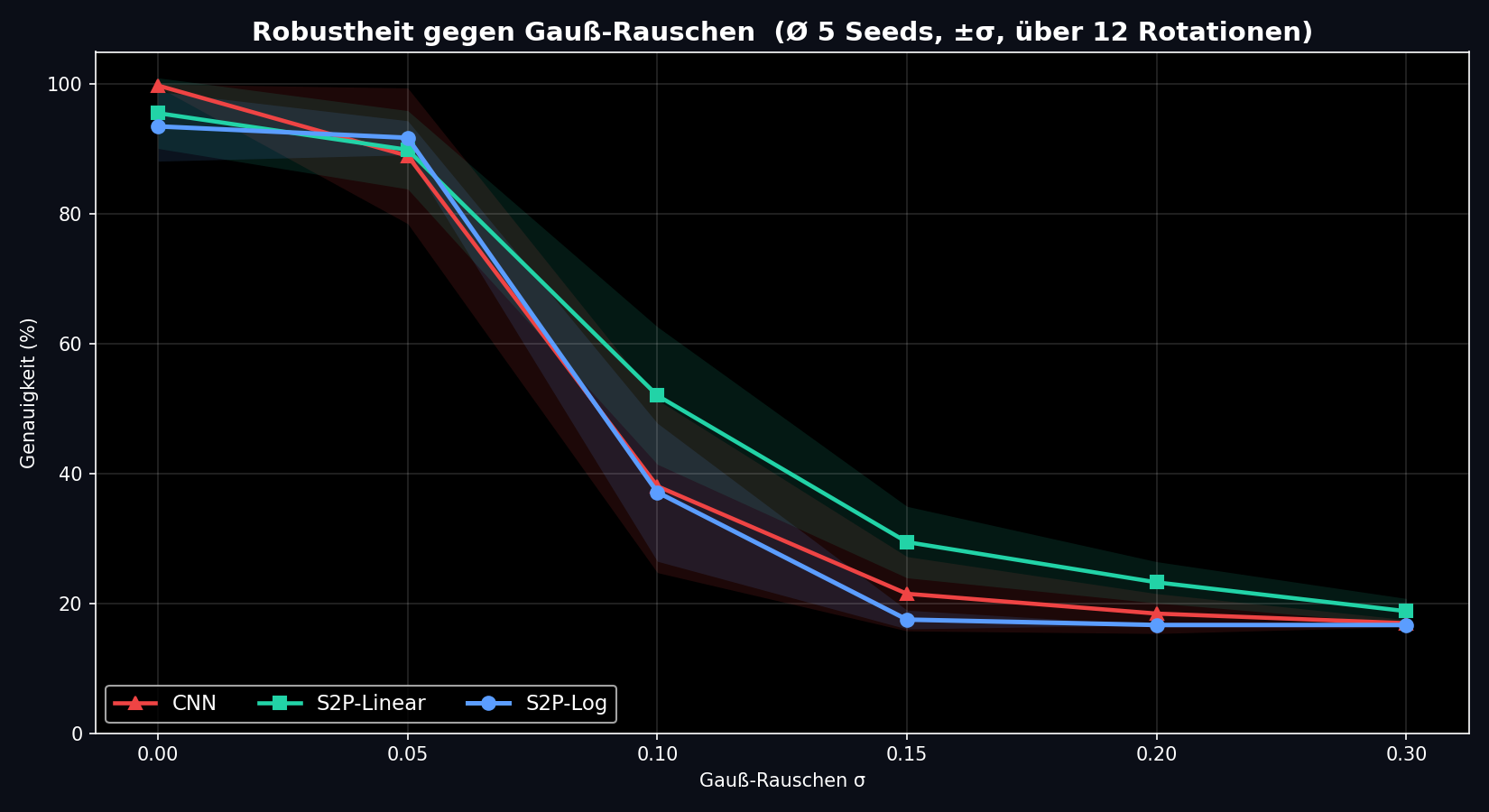}\hfill
\includegraphics[width=0.32\textwidth]{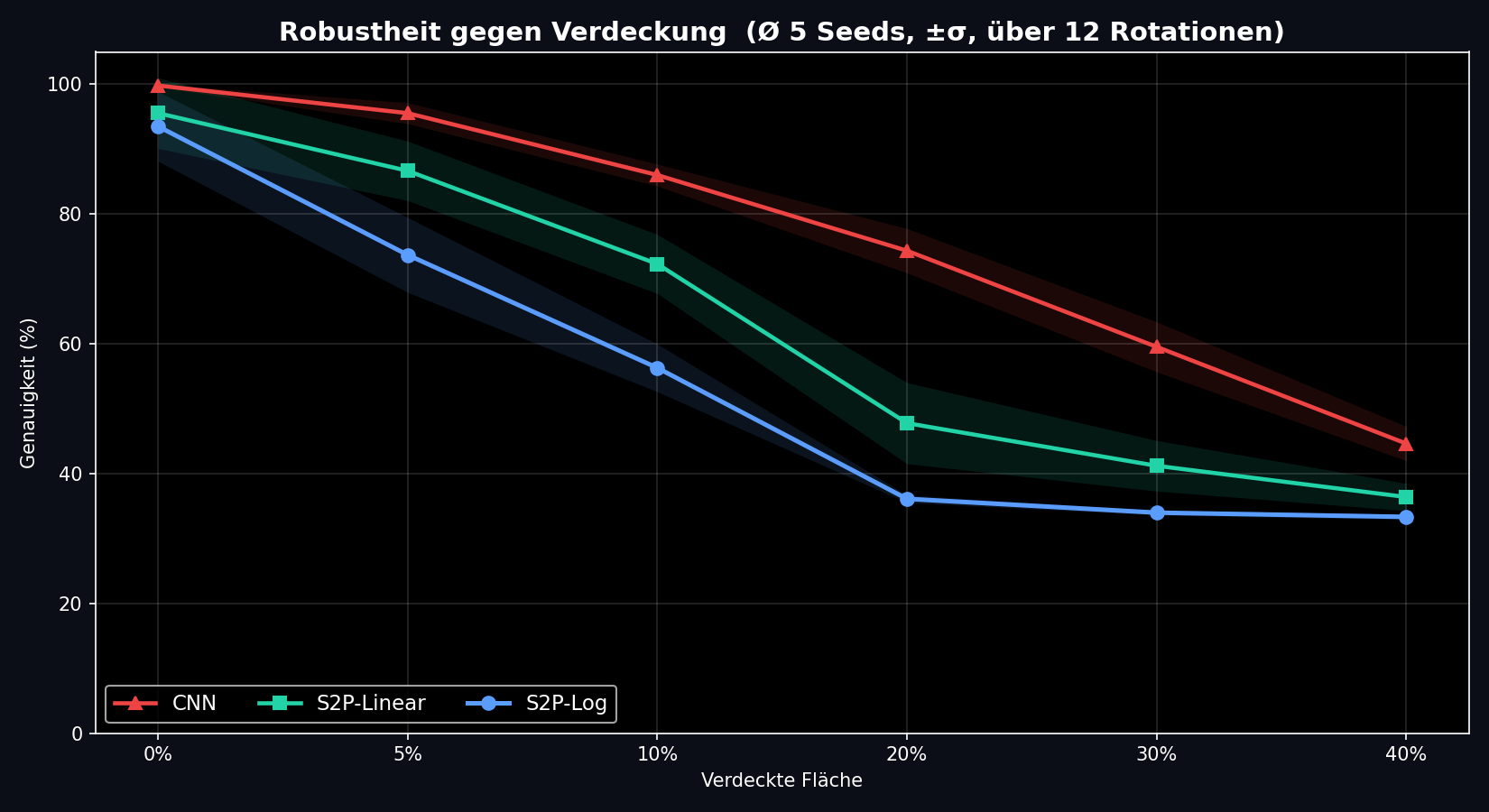}\hfill
\includegraphics[width=0.32\textwidth]{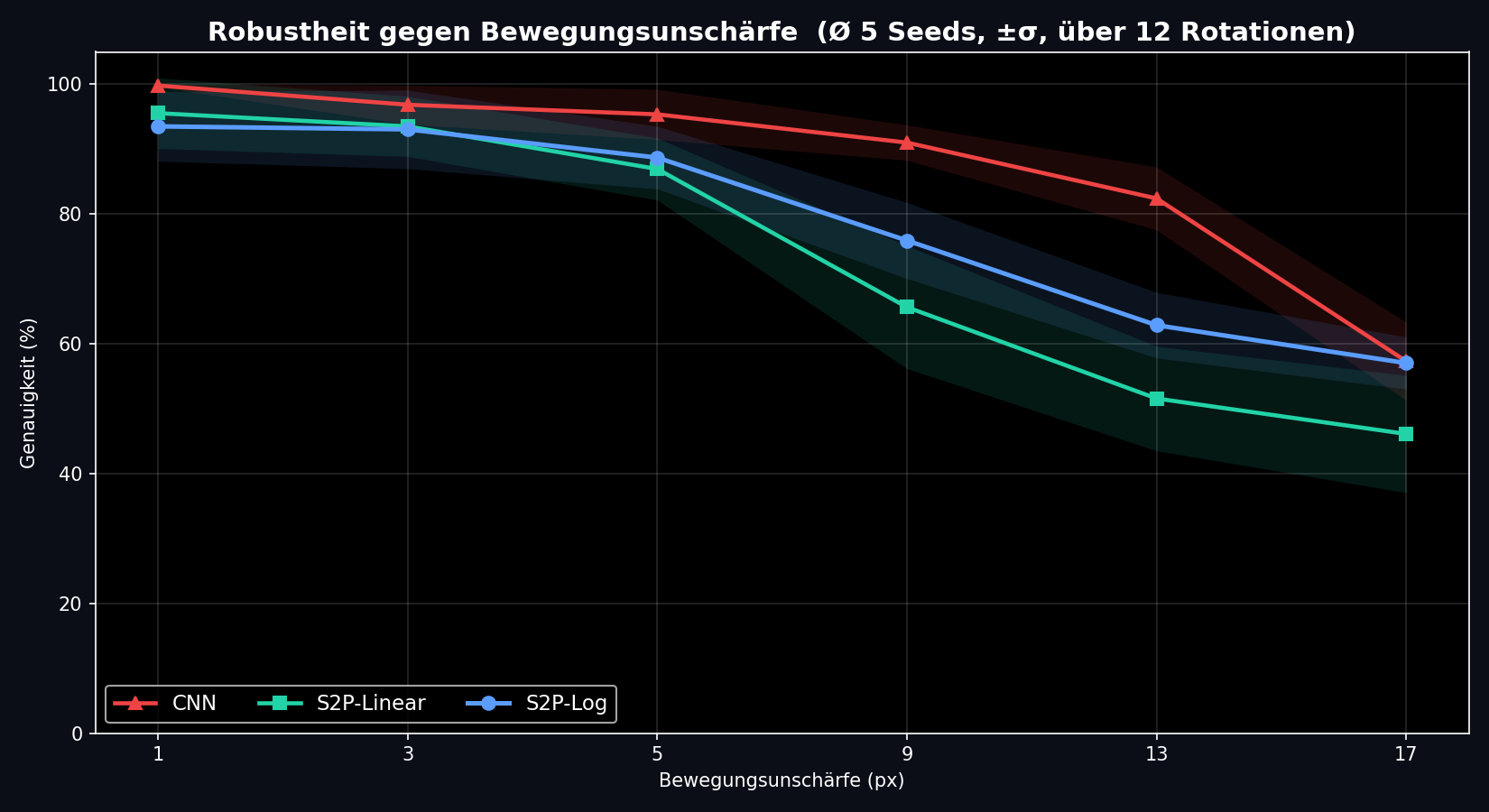}
\caption{Robustness to image degradations (5 seeds, shaded $\pm\sigma$, averaged over 12 rotations). Left: additive Gaussian noise (comparable, S2P-Linear marginally ahead). Centre: occlusion (CNN clearly most robust). Right: motion blur (CNN leads; log-polar most graceful among spectral models).}
\label{fig:robust}
\end{figure*}

\section{Discussion}
\label{sec:discussion}

\subsection{Why S2P-Net does not reach 100\% in the low-data setting}
Theorem~\ref{thm:invariance} guarantees rotation invariance of the feature representation, but not perfect classification. The $\approx71\%$ accuracy ceiling reflects factors unrelated to rotation: inter-class similarity at the spectral level (a circular washer and a cube face can share low-frequency harmonics), within-class variation in scale and exact centring, and the limited capacity of the 6,564-parameter classifier trained on only 3 images per class. Importantly, this ceiling is \emph{consistent across angles}, confirming that the residual errors are classification errors, not orientation errors.

\subsection{CNN performance at \texorpdfstring{$0^\circ$}{0 degrees} vs.\ S2P-Net}
At $0^\circ$ and $30^\circ$, the CNN outperforms S2P-Net (89.7\% vs.\ 72.1\%). This is expected: with only 3 training images per class and no rotation augmentation, the CNN fits the specific near-upright orientations seen during training and becomes a highly specialised detector for those poses. S2P-Net sacrifices this specialisation in exchange for uniform performance across all orientations. In practice, if the operating orientation is fully controlled and fixed, the CNN would be preferable. For unconstrained industrial settings, S2P-Net's flat profile is strongly advantageous.

\subsection{Symmetry classes and spectral discriminability}
The harmonic decomposition is physically meaningful. Hexagonal nuts ($k=6$), cubes ($k=4$), and washers ($k=0, 2, 4, \ldots$) have distinct symmetry profiles that appear directly in the magnitude spectrum. Electrical connectors with asymmetric pin arrangements have non-zero energy at $k=1$. This interpretability stands in contrast to the learned but opaque features of conventional CNNs.

\subsection{Limitations}
\textbf{Centring requirement.} Polar coordinates are computed relative to the image centre, so S2P-Net assumes the object is centred. Section~\ref{subsec:center_results} quantifies the cost of violating this assumption: accuracy is preserved for small offsets ($\le5\%$ of the image width) but falls below the translation-tolerant CNN beyond that, reaching near-chance at a $20\%$ offset. In our application centring is enforced by the preprocessing pipeline (Otsu thresholding + bounding-box crop); Section~\ref{subsec:frontend} shows that wiring this same centroid computation in as an explicit front-end restores the centred-baseline accuracy at every offset, reducing the limitation to the reliability of object detection.

\textbf{Scale sensitivity.} The original linear-polar formulation has no built-in scale invariance. Section~\ref{subsec:scale_results} shows that a log-polar transform \citep{sosnovik2020scale}---a one-line change to the radial sampling, with no added parameters---makes the model the most scale-stable of the three tested (mean $84.1\%$, $\sigma=12.5\%$), at the cost of a small accuracy reduction at the nominal scale. The guarantee is window-limited (Proposition~\ref{prop:scale}): it holds while the object remains within the sampled radial band and degrades once the object is clipped by the image border.

\textbf{Dataset size.} Our evaluation used 80 images across 4 classes. While sufficient to demonstrate the rotation-invariance advantage, a larger dataset would be needed to assess performance on more fine-grained distinctions and higher intra-class variability.

\textbf{In-plane rotation only.} The Fourier rotation theorem applies to 2-D in-plane rotations. S2P-Net does not handle 3-D pose variation (e.g., objects tilting out of plane).

\textbf{Sensitivity to local corruption.} Section~\ref{subsec:robustness} shows that S2P-Net is less robust to occlusion (and, to a lesser degree, blur) than the CNN, because the angular FFT converts any localised disturbance into a global perturbation of the magnitude spectrum. This is the same root cause as the centering sensitivity: the method assumes a complete, centred angular profile. It is the natural price of a global, analytic representation and suggests that S2P-Net is best deployed where objects are fully visible---or in a hybrid with a local-feature branch for partial-occlusion settings.

\textbf{Experimental repetition.} The original rotation experiments (Sections~\ref{subsec:full_data}--\ref{subsec:few_shot}) report a single training run per model; due to the stochasticity of initialisation and augmentation, repeated runs may shift the absolute values. The scale and centering experiments (Sections~\ref{subsec:scale_results}--\ref{subsec:center_results}) address this by averaging over five seeds with re-randomised splits, and the qualitative conclusions there are stable across seeds. Extending the five-seed protocol to the rotation experiments on a larger dataset remains future work.

\section{Conclusion}
\label{sec:conclusion}

We have presented S2P-Net, a rotation-invariant image classifier that achieves its invariance through a three-stage, parameter-free feature extraction pipeline grounded in the Fourier shift theorem. By transforming images to polar coordinates and analysing the angular Fourier magnitude spectrum, S2P-Net produces a 64-dimensional feature vector that is provably unchanged by any in-plane rotation. A 6,564-parameter MLP then performs classification on this invariant representation.

In a low-data industrial recognition scenario with only 12 training images (3 per class) and no rotation augmentation, S2P-Net maintains $71.2\%$ accuracy with a standard deviation of $1.6\%$ across 12 test orientations, while a standard CNN baseline averages $60.0\%$ with a standard deviation of $22.9\%$ and collapses to $19.1\%$ at $180^\circ$. When sufficient augmented data is provided, both models achieve perfect accuracy, confirming that the theoretical framework is sound and that S2P-Net's advantage is specifically in the data-limited regime.

The key take-away is practical: \emph{mathematical inductive bias can substitute for data}. In applications where collecting rotation-augmented training data is expensive or impractical---new product lines in manufacturing, surgical instrument recognition, aerial target classification---architectures that encode known invariances analytically will consistently outperform those that must learn them empirically.

Building on the two extensions introduced here---log-polar sampling, which adds scale invariance at zero parameter cost, and a centroid front-end, which removes the centering dependence---future work will replace the moment-based detector with a learned one for cluttered multi-object scenes, couple S2P-Net features with metric-learning objectives for few-shot generalisation to unseen classes, and deploy the system in a full robotic pick-and-place loop with an ESP32-controlled servo arm.

\section*{Acknowledgements}
This project was carried out independently as part of the youth-science competition, on a single consumer GPU. I am grateful to my family and teachers for their patience and encouragement through the many evenings it required, and to the open-source community behind PyTorch, OpenCV and NumPy, whose tools let one student test ideas that not long ago would have needed a laboratory. I also thank the nuts, washers, cubes and connectors that posed, without complaint, at every conceivable angle. The implementation and dataset are available from the author on request.

\bibliographystyle{abbrvnat}
\bibliography{references}

\end{document}